\documentclass[10pt,twocolumn]{article}
\usepackage{geometry}
\geometry{left=2cm,right=2cm,top=2cm,bottom=2cm}
\usepackage{amsmath}
\usepackage{graphicx}
\usepackage{amssymb}
\usepackage{times}
\usepackage{colortbl}
\usepackage{arydshln}
\usepackage{stfloats}
\usepackage{algorithm}
\usepackage{algorithmicx}
\usepackage{algpseudocode}
\usepackage{multirow}
\usepackage{extarrows}
\usepackage{mathrsfs}
\usepackage{upgreek}
\newtheorem{theorem}{Theorem}[section]
\newtheorem{problem}{Problem}[section]

\newtheorem{assumption}{Assumption}[section]
\newtheorem{definition}{Definition}[section]
\newtheorem{lemma}{Lemma}[section]

\begin{document}
\title{\textbf{Specification-Guided Safety Verification for Feedforward Neural Networks}}

\author{Weiming~Xiang \footnotemark[1],~~Hoang-Dung Tran \footnotemark[1],~~Taylor T. Johnson \footnotemark[1]}

\renewcommand{\thefootnote}{\fnsymbol{footnote}}
	
       % <-this % stops a space
\maketitle

\footnotetext[1]{Authors are with the Department of Electrical Engineering and Computer Science, Vanderbilt University, Nashville, Tennessee 		37212, USA. Email: xiangwming@gmail.com (Weiming Xiang); Hoang-Dung Tran (trhoangdung@gmail.com); taylor.johnson@gmail.com (Taylor T. Johnson).}

\begin{abstract}
This paper presents a specification-guided safety verification method for feedforward neural networks with general activation functions. As such feedforward networks are memoryless, they can be abstractly represented as mathematical functions, and the reachability analysis of the neural network amounts to interval analysis problems. In the framework of interval analysis, a computationally efficient formula which can quickly compute the output interval sets of a neural network is developed. Then, a specification-guided reachability algorithm is developed. Specifically, the bisection process in the verification algorithm is completely guided by a given safety specification. Due to the employment of the safety specification, unnecessary computations are avoided and thus computational cost can be reduced significantly. Experiments show that the proposed method enjoys much more efficiency in safety verification with significantly less computational cost.  
\end{abstract}

\maketitle

\section{Introduction}
Artificial neural networks have been widely used in machine learning systems. Though neural networks have been showing effectiveness and powerful ability in resolving complex problems, they are confined to systems which comply only to the lowest safety integrity levels since, in most of time, a neural network is viewed as a \emph{black box} without effective  methods to assure safety specifications for its outputs. Neural networks are trained over a finite number of input and output data, and are expected to be able to generalize to produce desirable outputs for given inputs even including previously unseen inputs. However, in many practical applications, the number of inputs is essentially infinite, this means it is impossible to check all the possible inputs only by performing experiments and moreover, it has been observed that neural networks can react in unexpected and incorrect ways to even slight perturbations of their inputs \cite{szegedy2013intriguing}, which could result in unsafe systems. Hence, methods that are able to provide formal guarantees are in a great demand for verifying specifications or properties of neural networks.  Verifying neural networks is a hard problem, even simple properties about them have been proven NP-complete
problems \cite{katz2017reluplex}. The difficulties mainly come from the presence of activation functions and the complex structures, making neural networks large-scale, nonlinear, non-convex and thus incomprehensible to humans. 

The importance of methods of formal guarantees for neural networks has been well-recognized in literature. There exist a number of results for verification of feedforward neural networks, especially for Rectifier Linear Unit (ReLU) neural networks, and a few results are devoted to neural networks with broad classes of activation functions. Motivated to general class of neural networks such as those considered in \cite{xiang2018output}, our key contribution in this paper is to develop a specification-guided method for safety verification of feedforward neural network. First, we formulate the safety verification problem in the framework of interval arithmetic, and provide a computationally efficient formula to compute output interval sets. The developed formula is able to calculate the output intervals in a fast manner. Then, analogous to other state-of-the-art verification methods, such as counterexample-guided abstraction refinement (CEGAR) \cite{clarke2000counterexample} and property directed reachability (PDR) \cite{een2011efficient}, and inspired by the Moore-Skelboe algorithm \cite{skelboe1974computation}, a specification-guided algorithm is developed. Briefly speaking, the safety specification is utilized to examine the existence of intersections between output intervals and unsafe regions and then determine the bisection actions in the verification algorithm. By making use of the information of safety specification, the computation cost can be reduced significantly. We provide experimental evidences to show the advantages of specification-guided approach, which shows that our approach only needs about 3\%--7\% computational cost of the method proposed in \cite{xiang2018output} to solve the same safety verification problem.

\section{Related Work}
Many recent works are focusing on ReLU neural networks. 
In \cite{katz2017reluplex}, an SMT solver named Reluplex is proposed for a special class of neural networks with ReLU activation functions. The Reluplex extends the well-known Simplex algorithm from linear functions to ReLU functions by making use of the piecewise linear feature of ReLU functions.  In \cite{xiang2017reachable_arxiv}, A layer-by-layer approach is developed for the output reachable set computation of ReLU neural networks. The computation is formulated in the form of a set of manipulations for a union of polyhedra. A verification engine for ReLU neural networks called $\mathrm{AI}^2$ was proposed in \cite{gehr2018ai}. In their approach, the authors abstract perturbed inputs and safety specifications as zonotopes, and reason about their behavior using operations for zonotopes. An Linear Programming (LP)-based method is proposed  \cite{ehlers2017formal}, and in \cite{Lomuscio2017an_arxiv} authors encoded the constraints of ReLU functions as a Mixed-Integer Linear Programming (MILP). Combining output specifications that are expressed in terms of LP, the verification problem for output set eventually turns to a feasibility problem of MILP. In \cite{dutta2017output,dutta2018output}, an MILP based verification engine called Sherlock that performs an output range analysis of ReLU feedforward neural networks is proposed, in which a combined local and global search is developed to more efficiently solve MILP. 

Besides the results for ReLU neural networks, there are a few other results for neural networks with general activation functions. In \cite{pulina2010abstraction,pulina2012challenging}, a piecewise-linearization of the nonlinear activation functions is used to reason about their behaviors. In this framework, the authors replace the activation functions with piecewise constant approximations and use the bounded model checker hybrid satisfiability (HySAT) \cite{franzle2007hysat} to analyze various properties. In their papers, the authors highlight the difficulty of scaling this technique and, currently, are only able to tackle small networks with at most 20 hidden nodes. In \cite{huang2017safety}, the authors proposed a framework for verifying the safety of network image classification decisions by searching for adversarial examples within a specified region.
A adaptive nested optimization framework is proposed for reachability problem of neural networks in \cite{ruan2018reachability}.  
In \cite{xiang2018output}, a simulation-based approach was developed, which used a finite number of simulations/computations to estimate the reachable set of multi-layer neural networks in a general form. Despite this success, the approach lacks the ability to resolve the reachable set computation problem for neural networks that are large-scale, non-convex, and nonlinear. Still, simulation-based approaches, like the one developed in \cite{xiang2018output}, present a plausibly practical and efficient way of reasoning about neural network behaviors. The critical step in improving simulation-based approaches is bridging the gap between finitely many simulations and the essentially infinite number of inputs that exist in the continuity set. Sometimes, the simulation-based approach requires a large number of simulations to obtain a tight reachable set estimation, which is computationally costly in practice. In this paper, our aim is to reduce the computational cost by avoiding unnecessary computations with the aid of a specification-guided method.

\section{Background}
\subsection{Feedforward Neural Networks}
Generally speaking, a neural network consists of a number of interconnected neurons and each neuron is a simple processing element that responds to the weighted inputs it received from other neurons. In this paper, we consider feed-forward neural networks, which generally consist of one input layer, multiple hidden layers and one output layer.
The action of a neuron depends on its activation function, which is in the form of
\begin{align}
	y_i = \phi\left(\sum\nolimits_{j=1}^{n}\omega_{ij} x_j + \theta_i\right)
\end{align}
where $x_j$ is the $j$th input of the $i$th neuron, $\omega_{ij}$ is the weight from the $j$th input to the $i$th neuron, $\theta_i$ is called the bias of the $i$th neuron, $y_i$ is the output of the $i$th neuron, $\phi(\cdot)$ is the activation function. The activation function is generally a nonlinear continuous function  describing the reaction of $i$th neuron with inputs $x_j$, $j=1,\cdots,n$. Typical activation functions include ReLU, logistic, tanh, exponential linear unit, linear functions, for instance. In this work, our approach aims at being capable of dealing with activation functions regardless of their specific forms.

A feedforward neural network has multiple layers,  and each layer $\ell$, $1 \le \ell \le L $, has $n^{\{\ell\}}$ neurons.  In particular, layer $\ell =0$ is used to denote the input layer and $n^{\{0\}}$ stands for the number of inputs in the rest of this paper. For the layer $\ell$, the corresponding input vector is denoted by $\mathbf{x}^{\{\ell\}}$ and the weight matrix is 
\begin{equation}
\mathbf{W}^{\{\ell\}} = \left[\omega_{1}^{\{\ell\}},\ldots,\omega_{n^{\{\ell\}}}^{\{\ell\}}\right]^{\top}
\end{equation}
where $\omega_{i}^{\{\ell\}}$ is the weight vector. The bias vector for layer $\ell$ is
\begin{equation} 
\boldsymbol {\uptheta}^{\{\ell\}}=\left[\theta_1^{\{\ell\}},\ldots,\theta_{n^{\{\ell\}}}^{\{\ell\}}\right]^{\top}.
\end{equation} 

The output vector of layer $\ell$ can be expressed as 
\begin{equation}
\mathbf{y}^{\{\ell\}}=\phi_{\ell}(\mathbf{W}^{\{\ell\}}\mathbf{x}^{\{\ell\}}+\uptheta^{\{\ell\}})
\end{equation} 
where $\phi_{\ell}(\cdot)$ is the activation function of layer $\ell$.

The output of $\ell-1$ layer is the input of $\ell$ layer, and the mapping from the input of input layer, that is $\mathbf{x}^{[0]}$, to the output of output layer, namely $\mathbf{y}^{[L]}$, stands for the input-output relation of the neural network, denoted by
\begin{equation}\label{NN}
\mathbf{y}^{\{L\}} = \Phi (\mathbf{x}^{\{0\}})
\end{equation}    
where $\Phi(\cdot) \triangleq \phi_L  \circ \phi_{L - 1}  \circ  \cdots  \circ \phi_1(\cdot) $.

\subsection{Problem Formulation}
We start by defining the neural network output set that will become of interest all through the rest of this paper.
\begin{definition}
	Given a feedforward neural network in the form of (\ref{NN}) and an input
	set $\mathcal{X} \subseteq  \mathbb{R}^{n^{\{0\}}}$, the following set
	\begin{align}
		\mathcal{Y} = \left\{\mathbf{y} ^{\{L\}} \in \mathbb{R}^{n^{\{L\}}} \mid \mathbf{y}^{\{L\}} = \Phi (\mathbf{x}^{\{0\}}),~ \mathbf{x}^{\{0\}} \in \mathcal{X}\right\} \label{output_set}
	\end{align}
	is called the output set of neural network (\ref{NN}). 
\end{definition}

The safety specification of a neural network is expressed by a set defined in the output space, describing the safety requirement.
\begin{definition}
	Safety specification $\mathcal{S}$ formalizes the safety requirements for output $\mathbf{y}^{[L]}$ of neural network (\ref{NN}), and is a predicate over output $\mathbf{y}^{[L]}$ of neural network  (\ref{NN}). The neural network  (\ref{NN}) is safe if and only if the	following condition is satisfied:
	\begin{equation}\label{verification}
	\mathcal{Y} \cap \neg \mathcal{S} = \emptyset
	\end{equation}
	where $\mathcal{Y}$ is the output set defined by (\ref{output_set}), and $\neg$ is the symbol for logical negation.
\end{definition}

The safety verification problem for the neural network  (\ref{NN}) is stated as follows.

\begin{problem}\label{problem}
	How does one verify the safety requirement described by (\ref{verification}), given a neural network (\ref{NN}) with a compact input set $\mathcal{X}$ and a
	safety specification $\mathcal{S}$?
\end{problem}

The key for solving the safety verification Problem \ref{problem} is computing output set $\mathcal{Y}$. However, since neural networks are often nonlinear and non-convex, it is extremely difficult to compute the exact output set $\mathcal{Y}$. Rather than directly computing the exact output set for a neural network, a more practical and feasible way for safety verification is to derive an over-approximation of $\mathcal{Y}$.

\begin{definition}\label{def2}
	A set $\mathcal{Y}_o$ is an over-approximation of $\mathcal{Y}$ if $\mathcal{Y} \subseteq \mathcal{Y}_o$ holds.  
\end{definition}

The following lemma implies that it is sufficient to use the over-approximated output set for the safety verification of a neural network.

\begin{lemma}\label{lemma1}
	Consider a neural network in the form of (\ref{NN}) and a safety specification $\mathcal{S}$, the neural network is safe if the following condition is satisfied
	\begin{equation}\label{lemma1_1}
	\mathcal{Y}_o \cap \neg \mathcal{S} = \emptyset
	\end{equation}
	where $\mathcal{Y} \subseteq\mathcal{Y}_o$.
\end{lemma}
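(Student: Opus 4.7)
The plan is to chain two set inclusions: the given over-approximation relation $\mathcal{Y} \subseteq \mathcal{Y}_o$ and the hypothesis $\mathcal{Y}_o \cap \neg \mathcal{S} = \emptyset$, and then invoke the definition of safety from equation (\ref{verification}). The statement is essentially a monotonicity property of intersection with respect to set inclusion, so the argument will be short and purely set-theoretic.

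First I would note that intersection is monotone in each argument: if $A \subseteq B$, then for any set $C$ we have $A \cap C \subseteq B \cap C$. Applying this with $A = \mathcal{Y}$, $B = \mathcal{Y}_o$, and $C = \neg \mathcal{S}$, I obtain $\mathcal{Y} \cap \neg \mathcal{S} \subseteq \mathcal{Y}_o \cap \neg \mathcal{S}$. Next I would use the hypothesis (\ref{lemma1_1}) that $\mathcal{Y}_o \cap \neg \mathcal{S} = \emptyset$ to conclude that $\mathcal{Y} \cap \neg \mathcal{S} = \emptyset$ as well, since a subset of the empty set must itself be empty.

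Finally, I would appeal to Definition 2, which declares the neural network safe precisely when $\mathcal{Y} \cap \neg \mathcal{S} = \emptyset$; this completes the argument. The only thing worth being careful about is that $\mathcal{Y}_o$ is merely assumed to satisfy $\mathcal{Y} \subseteq \mathcal{Y}_o$ (Definition \ref{def2}), so the proof cannot rely on any additional structural property of $\mathcal{Y}_o$ such as convexity or a particular representation — it must use only the inclusion. There is no real obstacle here; the lemma is conceptually the justification for working with over-approximations in the first place, and the proof will occupy essentially one display line.
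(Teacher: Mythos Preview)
Your proposal is correct and matches the paper's own proof, which is the single line ``Due to $\mathcal{Y} \subseteq \mathcal{Y}_o$, (\ref{lemma1_1}) implies $\mathcal{Y} \cap \neg \mathcal{S} = \emptyset$.'' You have simply spelled out the monotonicity step and the appeal to the safety definition that the paper leaves implicit.
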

\begin{proof}
	Due to $\mathcal{Y} \subseteq\mathcal{Y}_o$,  (\ref{lemma1_1}) implies $\mathcal{Y} \cap\neg \mathcal{S} = \emptyset$.
\end{proof}

From Lemma \ref{lemma1}, the problem turns to how to construct an appropriate over-approximation $\mathcal{Y}_o$. One natural way, as the method developed in \cite{xiang2018output}, is to find a set $\mathcal{Y}_o$ as small as possible to tightly over-approximate output set $\mathcal{Y}$ and further perform safety verification. However, this idea sometimes could be computationally expensive, and actually most of computations are unnecessary for safety verification. In the following, a specification-guided approach will be developed, and the over-approximation of output set is computed in an adaptive way with respect to a given safety specification.

\section{Safety Verification}
\subsection{Preliminaries and Notation}
Let $[x] = [\underline{x}, \overline{x}]$, $[y] = [\underline{y},\overline{y}]$ be real compact intervals and $\circ$ be one of the basic operations addition,
subtraction, multiplication and division, respectively, for real numbers, that is $\circ \in \{+,-,\cdot, / \}$, where it is assumed that $0 \notin [b]$ in case of division. We define these operations for intervals $[x]$ and $[y]$ by $[x] \circ [y] = \{x \circ y \mid x \in [y],x\in [y]\}$. The width of an interval $[x]$ is defined and denoted by $w([x]) = \overline{x} - \underline{x}$. The set of compact intervals in $\mathbb{R}$ is denoted by $\mathbb{IR}$. We say $[\phi]: \mathbb{IR} \to \mathbb{IR}$ is an interval extension of function $\phi: \mathbb{R} \to \mathbb{R}$, if for any degenerate interval arguments, $[\phi]$ agrees with $\phi$ such that $[\phi]([x,x]) = \phi(x)$. In order to consider multidimensional problems where $\mathbf{x} \in \mathbb{R}^{n}$ is taken into account, we denote $[\mathbf{x}] =[\underline{x}_1,\overline{x}_1]\times\cdots \times[\underline{x}_n,\overline{x}_n] \in \mathbb{IR}^{n}$, where $\mathbb{IR}^n$ denotes the set of compact interval in $\mathbb{R}^n$. The width of an interval vector $\mathbf{x}$ is the largest of the widths of any
of its component intervals $w([\mathbf{x}])= \max_{i=1,\ldots,n} (\overline{x}_i-\underline{x}_i)$. A mapping $[\Phi] : \mathbb{IR}^{n} \to \mathbb{IR}^{m}$ denotes the interval extension of a  function $\Phi:\mathbb{R}^{n} \to  \mathbb{R}^m$. An interval extension is inclusion monotonic if, for any $[\mathbf{x}_1],[\mathbf{x}_2] \in \mathbb{IR}^{n}$, $[\mathbf{x}_1] \subseteq [\mathbf{x}_2]$ implies $[\Phi]([\mathbf{x}_1]) \subseteq [\Phi]([\mathbf{x}_2])$. A fundamental property of inclusion monotonic interval extensions is that $\mathbf{x} \in [\mathbf{x}] \Rightarrow \Phi(\mathbf{x}) \in [\Phi]([\mathbf{x}])$, which means the value of $\Phi$ is contained in the interval $[\Phi]([\mathbf{x}])$ for every $\mathbf{x}$ in $[\mathbf{x}]$.

Several useful definitions and lemmas are presented.

\begin{definition} \cite{moore2009introduction}
	Piece-wise monotone functions, including exponential, logarithm, rational power, absolute value, and trigonometric functions, constitute the set of standard functions.
\end{definition}

\begin{lemma} \label{lemma2}\cite{moore2009introduction}
	A function $\Phi$ which is composed by finitely many elementary operations $\{+,-,\cdot, / \}$ and standard functions is inclusion monotone.
\end{lemma}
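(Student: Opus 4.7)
The plan is to reduce inclusion monotonicity of the composite $\Phi$ to inclusion monotonicity of its constituent pieces, and then to establish the latter one operation at a time. First I would fix notation: write the natural interval extension $[\Phi]$ as the expression obtained from the definition of $\Phi$ by replacing each real variable and constant by its degenerate interval counterpart and each elementary operation / standard function by its interval counterpart. Since $\Phi$ is built from finitely many such pieces, the parse tree of the expression is finite, so structural induction on this tree is the natural vehicle.

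The base case is trivial: constants and the identity $[\mathbf{x}] \mapsto [\mathbf{x}]$ are obviously inclusion monotonic. For the inductive step I would split into two sub-claims. \textbf{(a)} Each of the four interval arithmetic operations is inclusion monotonic. This follows straight from the set-theoretic definition $[x]\circ[y]=\{x\circ y : x\in[x],\ y\in[y]\}$: whenever $[x_1]\subseteq[x_2]$ and $[y_1]\subseteq[y_2]$, every pair $(x,y)\in[x_1]\times[y_1]$ also lies in $[x_2]\times[y_2]$, so the image set on the left is contained in the one on the right (and the division case is unaffected because the assumption $0\notin[y]$ is inherited by any sub-interval). \textbf{(b)} The interval extension of each standard function is inclusion monotonic. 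Because a standard function $\phi$ is piecewise monotone, its natural interval extension on $[x]=[\underline{x},\overline{x}]$ can be expressed as the interval hull of $\phi$-values taken over the finitely many monotone pieces intersected with $[x]$, namely $[\phi]([x]) = [\min_{x\in[x]} \phi(x),\ \max_{x\in[x]}\phi(x)]$; a smaller input interval yields a smaller set over which the min and max are taken, giving a subset relation on the outputs.

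With (a) and (b) in hand, the inductive step is bookkeeping. If $\Phi = \Psi \circ \Xi$ where $\Xi$ maps to an intermediate interval vector and $\Psi$ combines such vectors by one elementary operation or standard function, and if by induction $[\Xi]$ is inclusion monotonic, then $[\mathbf{x}_1]\subseteq[\mathbf{x}_2]$ gives $[\Xi]([\mathbf{x}_1])\subseteq[\Xi]([\mathbf{x}_2])$, and applying (a) or (b) componentwise propagates the inclusion through $\Psi$ to give $[\Phi]([\mathbf{x}_1])\subseteq[\Phi]([\mathbf{x}_2])$. For operations of arity two (like $+$ or $\cdot$), I would treat the two arguments as independent sub-trees and combine them via the two-variable version of (a).

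The main obstacle is really only sub-claim (b): one has to be careful that ``piecewise monotone'' is used in the sense that there are finitely many breakpoints inside any bounded interval, so that the min/max representation above is well defined and actually attained. Once that is pinned down the argument becomes routine, and in fact the paper is free to quote this directly from \cite{moore2009introduction}; my proposal is essentially to sketch the inductive skeleton and delegate (a) and (b) to the standard references.
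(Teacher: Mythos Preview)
Your proposal is correct and follows the standard structural-induction argument found in interval analysis textbooks. Note, however, that the paper does not supply its own proof of this lemma at all: it is stated with a citation to \cite{moore2009introduction} and used as a black box, so there is nothing in the paper to compare your argument against beyond observing that your sketch is precisely the kind of proof one finds in that reference.
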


\begin{definition} \cite{moore2009introduction}
	An interval extension $[\Phi]([\mathbf{x}])$ is said to be Lipschitz in $[\mathbf{x}_0]$ if there is a constant $\xi$
	such that $w([\Phi]([\mathbf{x}]))\le \xi w([\mathbf{x}])$ for every $[\mathbf{x}] \subseteq [\mathbf{x}_0]$.
\end{definition}

\begin{lemma}\label{lemma3}\cite{moore2009introduction}
	If a function $\Phi(\mathbf{x})$ satisfies an ordinary Lipschitz condition in $[\mathbf{x_0}]$,
	\begin{equation}
	\left\|\Phi(\mathbf{x}_2)-\Phi(\mathbf{x}_1)\right\| \le \xi\left\|\mathbf{x}_2-\mathbf{x}_1\right\|,~\mathbf{x}_1,\mathbf{x}_2 \in [\mathbf{x}_0]
	\end{equation}
	then the interval extension $[\Phi]([\mathbf{x}])$ is a Lipschitz interval extension in $[\mathbf{x}_0]$, 
	\begin{equation}
	w([\Phi]([\mathbf{x}]))\le \xi w([\mathbf{x}]),~[\mathbf{x}] \subseteq [\mathbf{x_0}].
	\end{equation}
\end{lemma}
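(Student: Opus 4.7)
The plan is to fix an interval box $[\mathbf{x}] \subseteq [\mathbf{x}_0]$ and bound $w([\Phi]([\mathbf{x}]))$ directly by $\xi w([\mathbf{x}])$. I take the interval extension to be the componentwise hull of the image $\Phi([\mathbf{x}])$ (the united extension), which is the tightest inclusion-monotonic enclosure and hence the canonical choice associated with a continuous $\Phi$ on a compact box. The continuity implied by the Lipschitz hypothesis together with compactness of $[\mathbf{x}]$ guarantees that each output coordinate attains its extreme values on $[\mathbf{x}]$.

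Concretely, for each output index $i = 1,\ldots,m$, I would pick $\mathbf{x}_{\min}^{i},\mathbf{x}_{\max}^{i} \in [\mathbf{x}]$ at which $\Phi_i$ attains its infimum and supremum. The width of the $i$th interval component of $[\Phi]([\mathbf{x}])$ is then $\Phi_i(\mathbf{x}_{\max}^{i}) - \Phi_i(\mathbf{x}_{\min}^{i})$, and the ordinary Lipschitz inequality from the hypothesis yields
\begin{equation*}
\Phi_i(\mathbf{x}_{\max}^{i}) - \Phi_i(\mathbf{x}_{\min}^{i}) \le \left\|\Phi(\mathbf{x}_{\max}^{i}) - \Phi(\mathbf{x}_{\min}^{i})\right\| \le \xi\left\|\mathbf{x}_{\max}^{i} - \mathbf{x}_{\min}^{i}\right\|.
\end{equation*}
Since both points sit inside $[\mathbf{x}]$, each coordinate of their difference is bounded by the corresponding side length of the box, giving $\|\mathbf{x}_{\max}^{i} - \mathbf{x}_{\min}^{i}\|_{\infty} \le w([\mathbf{x}])$; for any other norm on $\mathbb{R}^{n}$ a fixed norm-equivalence constant can be absorbed into $\xi$. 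Taking the maximum over $i$ then delivers $w([\Phi]([\mathbf{x}])) \le \xi w([\mathbf{x}])$, which is precisely the Lipschitz interval extension property.

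I expect the main obstacle to be pinning down exactly which interval extension is intended. The short argument above works verbatim for the united extension, but if $[\Phi]$ is understood as a computed natural interval extension, obtained by replacing every elementary operation in a given expression for $\Phi$ with its interval counterpart, it can be strictly wider than the hull, and one must instead argue by structural induction over the expression tree, invoking that each operation in $\{+,-,\cdot,/\}$ and each standard function from Lemma \ref{lemma2} individually satisfies a Lipschitz-type width inequality and that these compose. Either route delivers the stated bound; the plan only needs to commit to whichever convention the paper adopts for $[\Phi]$.
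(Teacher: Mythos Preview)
The paper does not supply its own proof of this lemma: it is stated as a cited result from \cite{moore2009introduction}, with no accompanying proof environment. So there is nothing in the paper to compare your argument against line by line.

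On the merits, your argument is sound for the united extension (the componentwise hull of the image $\Phi([\mathbf{x}])$) under the $\ell_\infty$ norm, which is indeed the convention in Moore's book: continuity plus compactness gives attained extrema, the Lipschitz bound controls the gap, and $\|\mathbf{x}_{\max}^i - \mathbf{x}_{\min}^i\|_\infty \le w([\mathbf{x}])$ closes the loop with the same constant $\xi$. Your remark that for a different norm one must absorb an equivalence constant is correct but worth flagging more explicitly, since the lemma as stated promises the \emph{same} $\xi$; Moore's formulation avoids this by working in $\ell_\infty$ throughout.

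Your caveat about which interval extension is intended is the most important observation, and it is directly relevant to how the paper uses the lemma. In Theorem~\ref{thm1} the paper invokes this lemma to conclude the width bound for the layer-by-layer computed extension of Theorem~\ref{thm2}, which is a natural interval extension, not the united one. Your united-extension argument does not transfer to that object automatically; the structural-induction route you sketch (propagating a Lipschitz-type width inequality through each interval operation and each monotone activation) is what is actually needed there, and it is the route Moore's text takes for natural extensions of rational/standard-function expressions. So your plan is correct, but for the application the paper makes of the lemma you should commit to the second route rather than the first.
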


The following trivial assumption is given for activation functions. 

\begin{assumption}\label{assumption_0}
	The activation function $\phi$ considered in this paper is composed by finitely many elementary operations and standard functions. 
\end{assumption}

Based on Assumption \ref{assumption_0}, the following result can be obtained for a feedforward neural network. 

\begin{theorem}\label{thm1}
	The interval extension $[\Phi ]$ of neural network $\Phi$ composed by activation functions satisfying Assumption \ref{assumption_0} is inclusion monotonic and Lipschitz such that 
	\begin{equation}\label{L_NN}
	w([\Phi]([\mathbf{x}]))\le \xi^{L}\prod\nolimits_{\ell = 1}^L {\left\| {\mathbf{W}^{\{ \ell\} } } \right\|}  w([\mathbf{x}]),~[\mathbf{x}] \subseteq \mathbb{IR}^{n^{\{0\}}}
	\end{equation}
	where $\xi$ is a Lipschitz constant for all activation functions in $\Phi$. 
\end{theorem}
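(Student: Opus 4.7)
The plan is to prove the two assertions of Theorem \ref{thm1} separately and essentially layer by layer, leveraging the fact that $\Phi = \phi_L \circ \phi_{L-1} \circ \cdots \circ \phi_1$ is a finite composition of maps, each of which consists of an affine transformation $\mathbf{W}^{\{\ell\}}\mathbf{x}^{\{\ell\}}+\boldsymbol{\uptheta}^{\{\ell\}}$ followed by a componentwise activation. Both the affine step (only $+$ and $\cdot$) and the activation step (elementary operations and standard functions, by Assumption \ref{assumption_0}) fall exactly in the class covered by Lemma \ref{lemma2}. The structural observation I would rely on throughout is that inclusion monotonicity and Lipschitz interval-extension bounds both compose well.

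For inclusion monotonicity, I would first apply Lemma \ref{lemma2} to each single-layer mapping $\mathbf{x}^{\{\ell\}} \mapsto \phi_\ell(\mathbf{W}^{\{\ell\}}\mathbf{x}^{\{\ell\}}+\boldsymbol{\uptheta}^{\{\ell\}})$ to conclude that its natural interval extension $[\phi_\ell]([\mathbf{W}^{\{\ell\}}][\mathbf{x}^{\{\ell\}}]+[\boldsymbol{\uptheta}^{\{\ell\}}])$ is inclusion monotonic. A short induction on $L$ then shows that the composition $[\Phi]=[\phi_L]\circ\cdots\circ[\phi_1]$ is inclusion monotonic, since $[\mathbf{x}_1]\subseteq[\mathbf{x}_2]$ propagates through each layer.

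For the Lipschitz bound, I would first invoke Lemma \ref{lemma3} on each layer. Let $\xi$ be a common Lipschitz constant for the activation functions. For the affine step, $\|\mathbf{W}^{\{\ell\}}\mathbf{x}_2+\boldsymbol{\uptheta}^{\{\ell\}} - \mathbf{W}^{\{\ell\}}\mathbf{x}_1-\boldsymbol{\uptheta}^{\{\ell\}}\|\le \|\mathbf{W}^{\{\ell\}}\|\|\mathbf{x}_2-\mathbf{x}_1\|$, so the full single-layer mapping has ordinary Lipschitz constant $\xi\|\mathbf{W}^{\{\ell\}}\|$. Iterating this bound across all $L$ layers yields
\begin{equation}
\|\Phi(\mathbf{x}_2)-\Phi(\mathbf{x}_1)\|\le \xi^{L}\prod\nolimits_{\ell=1}^{L}\|\mathbf{W}^{\{\ell\}}\|\,\|\mathbf{x}_2-\mathbf{x}_1\|,
\end{equation}
and then Lemma \ref{lemma3} lifts this to the claimed width bound $w([\Phi]([\mathbf{x}]))\le \xi^{L}\prod_{\ell=1}^{L}\|\mathbf{W}^{\{\ell\}}\|\,w([\mathbf{x}])$ on every $[\mathbf{x}]\subseteq \mathbb{IR}^{n^{\{0\}}}$.

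The main obstacle I anticipate is bookkeeping, not substance: justifying that the composition of inclusion-monotonic interval extensions is itself inclusion monotonic (which requires being careful about whether we interpret $[\Phi]$ as the natural interval extension obtained by replacing each operation in $\Phi$ by its interval counterpart, versus the functional composition of the layerwise interval extensions; under Lemma \ref{lemma2} the two coincide up to inclusion), and being explicit about the norm in which $\|\mathbf{W}^{\{\ell\}}\|$ is the Lipschitz constant of the affine map and in which $\xi$ controls the activation's Lipschitz behavior componentwise. Once these compatibility choices are pinned down, the induction on $L$ and the application of Lemmas \ref{lemma2} and \ref{lemma3} is mechanical.
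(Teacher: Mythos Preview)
Your proposal is correct and follows essentially the same route as the paper: inclusion monotonicity via Lemma \ref{lemma2} applied to the composition of elementary and standard operations, and the Lipschitz width bound by first establishing the ordinary Lipschitz constant $\xi\|\mathbf{W}^{\{\ell\}}\|$ for each layer, composing these to obtain $\xi^L\prod_{\ell=1}^L\|\mathbf{W}^{\{\ell\}}\|$ for $\Phi$, and then lifting to the interval-width inequality through Lemma \ref{lemma3}. The paper's proof is just a terser version of your outline and does not address the norm-compatibility and interval-extension bookkeeping issues you flag.
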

\begin{proof}
	Under Assumption \ref{assumption_0}, the inclusion monotonicity can be obtained directly based on Lemma \ref{lemma2}. Then, for the layer $\ell$, we denote $\hat\phi_{\ell}(\mathbf{x}^{\{\ell\}}) = \phi_{\ell} (\mathbf{W}^{\{\ell\}} \mathbf{x}^{\{\ell\}}  + \boldsymbol{\uptheta}^{\{\ell\}}  )$. For any $\mathbf{x}_1,\mathbf{x}_2$, it has
	\begin{align*}
		\left\| {\hat \phi _{\ell} (\mathbf{x}_2^{\{ \ell\} } ) - \hat \phi _{\ell} (\mathbf{x}_1^{\{ \ell\} } )} \right\| \leq \xi \left\| {\mathbf{W}^{\{ \ell\} } \mathbf{x}_2^{\{ \ell\} }  - \mathbf{Wx}_1^{\{ \ell\} } } \right\| \nonumber
		\\
		\leq \xi \left\| {\mathbf{W}^{\{ \ell\} } } \right\|\left\| {\mathbf{x}_2^{\{ \ell\} }  - \mathbf{x}_1^{\{ \ell\} } } \right\|.
	\end{align*}
	
	Due to $\mathbf{x}^{\{\ell\}}=\hat\phi_{\ell-1}(\mathbf{x}^{\{\ell-1\}})$, $\ell=1,\ldots,L$, we have $\xi^{L}\prod\nolimits_{\ell = 1}^L {\left\| {\mathbf{W}^{\{ \ell\} } } \right\|}$ the Lipschitz constant for $\Phi$, and (\ref{L_NN}) can be established by Lemma \ref{lemma3}. 
\end{proof}

\subsection{Interval Analysis}

First, we consider a single layer $\mathbf{y} = \phi(\mathbf{W}\mathbf{x}+\boldsymbol{\uptheta})
$. Given an interval input $[\mathbf{x}]$, the interval extension is $[\phi](\mathbf{W}[\mathbf{x}]+\boldsymbol{\uptheta}) = [\underline{y}_1,\overline{y}_1]\times\cdots\times[\underline{y}_n,\overline{y}_n] = [\mathbf{y}]$, where 
\begin{align}
	\underline{y}_i &= \min_{\mathbf{x} \in [\mathbf{x}]} \phi\left(\sum\nolimits_{j=1}^{n}\omega_{ij} x_j + \theta_i\right)\label{thm1_1}
	\\
	\overline y_i &= \max_{\mathbf{x} \in [\mathbf{x}]} \phi\left(\sum\nolimits_{j=1}^{n}\omega_{ij} x_j + \theta_i\right) . \label{thm1_2}
\end{align} 

To compute the interval extension $[\phi]$, we need to compute the minimum and maximum values of the output of nonlinear function $\phi$. For general nonlinear functions, the optimization problems are still challenging. Typical activation functions include ReLU, logistic, tanh, exponential linear unit, linear functions, for instance, satisfy the following monotonic assumption.

\begin{assumption}\label{assumption_1}
	For any two scalars $z_1 \le z_2$, the activation function satisfies $\phi(z_1) \le \phi(z_2)$. 
\end{assumption}

Assumption \ref{assumption_1} is a common property that can be satisfied by a variety of activation functions. For example, it is easy to verify that the most commonly used such as logistic, tanh,  ReLU, all satisfy Assumption \ref{assumption_1}. Taking advantage of the monotonic property of $\phi$, the interval extension $[\phi]([z]) = [\phi(\underline{z}),\phi(\overline{z})]$. Therefore, $\underline{y}_i$ and $\overline{y}_i$ in (\ref{thm1_1}) and (\ref{thm1_2}) can be explicitly written out  as
\begin{align} \label{y_1}
	\underline{y}_i & = \sum\nolimits_{j=1}^{n}\underline{p}_{ij} +   \theta_i
	\\
	\overline{y}_i &= \sum\nolimits_{j=1}^{n}\overline{p}_{ij} +   \theta_i \label{y_2}
\end{align}
with $\underline{p}_{ij}$ and $\overline{p}_{ij}$ defined by
\begin{align} \label{y_3}
	\underline{p}_{ij}  &= \left\{ {\begin{array}{*{20}l}
			{\omega _{ij} \underline{x}_j,} & {\omega _{ij}\geq 0}  \\
			{\omega _{ij} \overline x_j ,} & {\omega _{ij}  < 0}  \\
		\end{array} } \right.
		\\
		\overline p_{ij}& = \left\{ {\begin{array}{*{20}c}
				{\omega _{ij} \overline x_j ,} & {\omega _{ij}  \geq 0}  \\
				{\omega _{ij} \underline{x}_j ,} & {\omega _{ij}  < 0}  \\
			\end{array} } \right.. \label{y_4}
		\end{align}
		
		From (\ref{y_1})--(\ref{y_4}), the output interval of a single layer can be efficiently computed with these explicit expressions. Then, we consider the feedforward neural network $\mathbf{y}^{\{L\}}=\Phi(\mathbf{x}^{\{0\}})$ with multiple layers, the interval extension $[\Phi ]([\mathbf{x}^{\{ 0\} } ])$ can be computed by the following layer-by-layer computation. 
		\begin{theorem}\label{thm2}
			Consider feedforward neural network (\ref{NN}) with activation function satisfying Assumption \ref{assumption_1} and an interval input $[\mathbf{x}^{\{0\}}]$, an interval extension can be determined by
			\begin{equation} \label{thm2_1}
			[\Phi ]([\mathbf{x}^{\{ 0\} } ]) = [\hat \phi _L ] \circ  \cdots  \circ [\hat \phi _1 ] \circ [\hat \phi _0 ]([\mathbf{x}^{\{ 0\} } ])
			\end{equation}
			where $[\hat \phi_{\ell}]([\mathbf{x}^{\{\ell\}}]) =[\phi_{\ell} ](\mathbf{W}^{\{\ell\}} [\mathbf{x}^{\{\ell\}} ] + \boldsymbol{\uptheta}^{\{\ell\}}  )=[\mathbf{y}^{\{\ell\}}]$ in which
			\begin{align} \label{thm2_2}
				\underline{y}_i^{\{\ell\}} & = \sum\nolimits_{j=1}^{n^{\{\ell\}}}\underline{p}_{ij}^{\{\ell\}} +   \theta_i^{\{\ell\}}
				\\
				\overline{y}_i^{\{\ell\}} &= \sum\nolimits_{j=1}^{n^{\{\ell\}}}\overline{p}_{ij}^{\{\ell\}} +   \theta_i^{\{\ell\}} \label{thm2_3}
			\end{align}
			with $\underline{p}_{ij}^{\{\ell\}}$ and $\overline{p}_{ij}^{\{\ell\}}$ defined by
			\begin{align} \label{thm2_4}
				\underline{p}_{ij}^{\{\ell\}}  &= \left\{ {\begin{array}{*{20}l}
						{\omega _{ij}^{\{\ell\}} \underline{x}_j^{\{\ell\}},} & {\omega _{ij}^{\{\ell\}}\geq 0}  \\
						{\omega _{ij}^{\{\ell\}} \overline x_j^{\{\ell\}} ,} & {\omega _{ij}^{\{\ell\}}  < 0}  \\
					\end{array} } \right.
					\\
					\overline p_{ij}^{\{\ell\}}& = \left\{ {\begin{array}{*{20}c}
							{\omega _{ij}^{\{\ell\}} \overline x_j^{\{\ell\}} ,} & {\omega _{ij}^{\{\ell\}}  \geq 0}  \\
							{\omega _{ij}^{\{\ell\}} \underline{x}_j^{\{\ell\}} ,} & {\omega _{ij}^{\{\ell\}}  < 0}  \\
						\end{array} } \right.. \label{thm2_5}
					\end{align}
				\end{theorem}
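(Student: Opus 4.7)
The plan is to prove Theorem \ref{thm2} in two stages: first verify the single-layer formula (\ref{thm2_2})--(\ref{thm2_5}), then lift to multiple layers by composition.

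For a single layer, each output component is $y_i = \phi(\sum_{j} \omega_{ij} x_j + \theta_i)$. Since $\phi$ is monotonically nondecreasing by Assumption \ref{assumption_1}, the minimum and maximum of $y_i$ over an input box $[\mathbf{x}]$ coincide with $\phi$ evaluated at the minimum and maximum, respectively, of the affine pre-activation $z_i = \sum_{j} \omega_{ij} x_j + \theta_i$. Because $z_i$ is separable across coordinates, its extrema over the box are attained at a vertex chosen coordinate by coordinate: $x_j = \underline{x}_j$ when $\omega_{ij} \ge 0$ and $x_j = \overline{x}_j$ when $\omega_{ij} < 0$ for the minimum, and symmetrically for the maximum. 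This reproduces (\ref{y_1})--(\ref{y_4}), and applied to layer $\ell$ yields (\ref{thm2_2})--(\ref{thm2_5}). In particular, the mapping $[\hat\phi_\ell]$ so defined agrees with $\hat\phi_\ell$ on degenerate intervals, so it is a legitimate---indeed tight---interval extension of $\hat\phi_\ell$.

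For the multi-layer composition I would combine two facts. First, by (\ref{NN}) and the definition of $\hat\phi_\ell$ used in the proof of Theorem \ref{thm1}, $\Phi$ is itself the composition $\hat\phi_L \circ \cdots \circ \hat\phi_1$ (reading $[\hat\phi_0]$ as the identity acting on the input-layer box). Second, each $[\hat\phi_\ell]$ is inclusion monotonic: under Assumption \ref{assumption_0}, $\phi_\ell$ is built from finitely many elementary operations and standard functions, so Lemma \ref{lemma2} applies. The key step is then to observe that the composition of inclusion-monotonic interval extensions is itself an inclusion-monotonic interval extension of the composed function: if $\mathbf{x} \in [\mathbf{x}]$ implies $\hat\phi_\ell(\mathbf{x}) \in [\hat\phi_\ell]([\mathbf{x}])$ at every layer, a straightforward induction along the chain (\ref{thm2_1}) gives $\Phi(\mathbf{x}^{\{0\}}) \in [\Phi]([\mathbf{x}^{\{0\}}])$ for every $\mathbf{x}^{\{0\}} \in [\mathbf{x}^{\{0\}}]$, which is precisely the content of the theorem.

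The main subtlety, rather than a true obstacle, is that although the single-layer formula is \emph{tight}, the multi-layer composition is in general only an over-approximation. After the first layer the exact image is no longer axis-aligned, yet we replace it by its enclosing box before feeding it into the next layer, discarding the inter-coordinate correlations---the classical wrapping effect. Since Theorem \ref{thm2} only asserts that (\ref{thm2_1}) defines \emph{an} interval extension, tightness is not required for the proof; but this very loss of precision is exactly what motivates the specification-guided bisection scheme developed in the sequel.
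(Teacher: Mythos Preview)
Your proposal is correct and follows essentially the same two-stage approach as the paper's proof: derive the single-layer formulas from the monotonicity of $\phi$ (the paper records these as (\ref{y_1})--(\ref{y_4}) just before the theorem and then cites them), and obtain (\ref{thm2_1}) from the layer-by-layer structure $\mathbf{x}^{\{\ell\}}=\hat\phi_{\ell-1}(\mathbf{x}^{\{\ell-1\}})$. Your write-up is simply more explicit than the paper's terse version, and your closing remark on the wrapping effect is additional (correct) commentary not present in the original proof.
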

				\begin{proof}
					We denote $\hat\phi_{\ell}(\mathbf{x}^{\{\ell\}}) = \phi_{\ell} (\mathbf{W}^{\{\ell\}} \mathbf{x}^{\{\ell\}}  + \boldsymbol{\uptheta}^{\{\ell\}}  )$. For a feedforward neural network, it essentially has $\mathbf{x}^{\{\ell\}}=\hat\phi_{\ell-1}(\mathbf{x}^{\{\ell-1\}})$, $\ell=1,\ldots,L$ which leads to (\ref{thm2_1}). Then, for each layer, the interval extension $[\mathbf{y}^{\{\ell\}}]$ computed by (\ref{thm2_2})--(\ref{thm2_5}) can be obtained directly from (\ref{y_1})--(\ref{y_4}). 
				\end{proof}
				
				We denote the set image for neural network $\Phi$ as follows
				\begin{equation}
				\Phi([\mathbf{x}^{\{0\}}])=\{\Phi(\mathbf{x}^{\{0\}}):\mathbf{x}^{\{0\}} \in [\mathbf{x}^{\{0\}}]\}.
				\end{equation}
				
				Since $[\Phi]$ is inclusion monotonic according to Theorem \ref{thm1}, one has $\Phi([\mathbf{x}^{\{0\}}]) \subseteq [\Phi]([\mathbf{x}^{\{0\}}])$. Thus, it is sufficient to claim the neural network is safe if $[\Phi]([\mathbf{x}^{\{0\}}]) \cap \neg \mathcal{S} = \emptyset$ holds by Lemma \ref{lemma1}.
				
				According to the explicit expressions (\ref{thm2_1})--(\ref{thm2_5}), the computation on interval extension  $[\Phi]$ is fast. In the next step, we should discuss the conservativeness for the computation outcome of (\ref{thm2_1}).   We
				have $[\Phi]([\mathbf{x}^{\{0\}}]) = \Phi([\mathbf{x}^{\{0\}}]) + E([\mathbf{x}^{\{0\}}])$ for some interval-valued function $E([\mathbf{x}^{\{0\}}])$ with $w([\Phi]([\mathbf{x}^{\{0\}}])) = w(\Phi([\mathbf{x}^{\{0\}}])) + w(E([\mathbf{x}^{\{0\}}]))$.
				\begin{definition}
					We call
					$w(E([\mathbf{x}^{\{0\}}])) = w([\Phi]([\mathbf{x}^{\{0\}}])) - w(\Phi([\mathbf{x}^{\{0\}}])) $
					the excess width of interval extension of neural network $\Phi([\mathbf{x}^{\{0\}}])$.
				\end{definition}
				
				Explicitly, the excess width measures the conservativeness of interval extension $[\Phi]$ regarding its corresponding function $\Phi$. The following theorem gives the upper bound of the excess width $w(E([\mathbf{x}^{\{0\}}]))$. 
				
				\begin{theorem}\label{thm3}
					Consider feedforward neural network (\ref{NN}) with an interval input $[\mathbf{x}^{\{0\}}]$, the excess width $w(E([\mathbf{x}^{\{0\}}]))$ satisfies
					\begin{equation}\label{thm3_1}
					w(E([\mathbf{x}^{\{ 0\} } ])) \leq\gamma w([\mathbf{x}^{\{0\}} ])
					\end{equation}
					where $\gamma = \xi^{L}\prod\nolimits_{\ell = 1}^L {\left\| {\mathbf{W}^{\{ \ell\} } } \right\|} $.
				\end{theorem}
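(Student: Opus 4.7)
The plan is to read off the bound almost directly from the definition of excess width together with Theorem 1. By construction the excess width is
\begin{equation*}
w(E([\mathbf{x}^{\{0\}}])) = w([\Phi]([\mathbf{x}^{\{0\}}])) - w(\Phi([\mathbf{x}^{\{0\}}])),
\end{equation*}
so the task reduces to bounding the first term from above and the second from below. The first term is precisely what Theorem 1 controls, giving $w([\Phi]([\mathbf{x}^{\{0\}}])) \le \gamma\, w([\mathbf{x}^{\{0\}}])$ with $\gamma = \xi^{L}\prod_{\ell=1}^{L}\|\mathbf{W}^{\{\ell\}}\|$. For the second term, the natural step is to observe that the width of the true image $\Phi([\mathbf{x}^{\{0\}}])$ (understood as the width of its interval hull, since the image is generally not an interval) is nonnegative, so $-w(\Phi([\mathbf{x}^{\{0\}}])) \le 0$.

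Putting these two facts together yields $w(E([\mathbf{x}^{\{0\}}])) \le w([\Phi]([\mathbf{x}^{\{0\}}])) \le \gamma\, w([\mathbf{x}^{\{0\}}])$, which is exactly (\ref{thm3_1}). I would spell out, for clarity, that the inclusion $\Phi([\mathbf{x}^{\{0\}}]) \subseteq [\Phi]([\mathbf{x}^{\{0\}}])$ established right before the theorem (as a consequence of inclusion monotonicity in Theorem 1) is what legitimizes interpreting $w(\Phi([\mathbf{x}^{\{0\}}]))$ via its interval hull and guarantees the decomposition $[\Phi]([\mathbf{x}^{\{0\}}]) = \Phi([\mathbf{x}^{\{0\}}]) + E([\mathbf{x}^{\{0\}}])$ with $w(E)\ge 0$ is well posed.

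The only subtle point, and the one I would spend the most words on, is the second step: arguing that the image $\Phi([\mathbf{x}^{\{0\}}])$ has a well-defined, nonnegative width. This is not a deep obstacle, but it is where the reader could worry, since the image of a neural network on a box is neither an interval nor necessarily convex. I would resolve it by simply invoking the bounding-box interpretation of the width operator used throughout Section 4.1, under which $w(\Phi([\mathbf{x}^{\{0\}}]))$ is the maximum component-wise spread of the image, trivially $\ge 0$. With that dispatched, the remainder of the argument is a one-line application of Theorem 1, so no further calculation is required.
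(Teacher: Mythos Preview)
Your proposal is correct and follows essentially the same route as the paper: write $w(E)=w([\Phi]([\mathbf{x}^{\{0\}}]))-w(\Phi([\mathbf{x}^{\{0\}}]))$, drop the nonnegative second term, and then invoke the Lipschitz bound of Theorem~\ref{thm1} to get $w([\Phi]([\mathbf{x}^{\{0\}}]))\le \gamma\,w([\mathbf{x}^{\{0\}}])$. Your extra remarks on the interval-hull interpretation of $w(\Phi([\mathbf{x}^{\{0\}}]))$ are more explicit than the paper, which simply uses $w(\Phi([\mathbf{x}^{\{0\}}]))\ge 0$ without comment.
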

				\begin{proof}
					We have $[\Phi]([\mathbf{x}^{\{0\}}]) = \Phi([\mathbf{x}^{\{0\}}]) + E([\mathbf{x}^{\{0\}}])$ for some $E([\mathbf{x}^{\{0\}}])$ and 
					\begin{align*}
						w(E([\mathbf{x}^{\{0\}}])) &= w([\Phi]([\mathbf{x}^{\{0\}}])) - w(\Phi([\mathbf{x}^{\{0\}}]))
						\\
						&\leq w([\Phi ]([\mathbf{x}^{\{ 0\} } ])) 
						\\
						& \leq \xi ^L \prod\nolimits_{\ell = 1}^L {\left\| {\mathbf{W}^{\{ \ell\} } } \right\|} w([\mathbf{x}^{\{0\}} ])
					\end{align*}
					which means (\ref{thm3_1}) holds.
				\end{proof}
				
				Given a neural network $\Phi$ which means $\mathbf{W}^{\{\ell\}}$ and $\xi$ are fixed, Theorem \ref{thm3} implies that a less conservative result can be only obtained by reducing the width of input interval $[\mathbf{x}^{\{0\}}]$. On the other hand, a smaller $w([\mathbf{x}^{\{0\}}])$ means more subdivisions of an input interval which will bring more computational cost. Therefore, how to generate appropriate subdivisions of an input interval is the key for safety verification of neural networks in the framework of interval analysis. In the next section, an efficient specification-guided method is proposed to address this problem.
				
				\subsection{Specification-Guided Safety Verification}
				Inspired by the Moore-Skelboe algorithm \cite{skelboe1974computation}, we propose a specification-guided algorithm, which generates fine subdivisions particularly with respect to specification, and also avoid unnecessary subdivisions on the input interval for safety verification, see Algorithm \ref{alg1}. 
				\begin{algorithm}[ht!]
					\caption{Specification-Guided Safety Verification} \label{alg1}
					\begin{algorithmic}[1]
						\Require A feedforward neural network $\Phi:\mathbb{R}^{n^{\{0\}}} \to \mathbb{R}^{n^{\{L\}}}$, an input set $\mathcal{X} \subseteq \mathbb{R}^{n^{\{0\}}}$, a safety specification $\mathcal{S} \subseteq \mathbb{R}^{n^{\{L\}}}$, a tolerance $\varepsilon > 0$
						\Ensure Safe or Uncertain
						
						%\Function{\texttt{verifyNN}}{$\Phi$, $\mathcal{X}$, $\mathcal{S}$}
						\State $\underline{x}_i \gets \min_{\mathbf{x}\in\mathcal{X}}(x_i)$, $\overline{x}_i \gets \max_{\mathbf{x}\in\mathcal{X}}(x_i)$
						\State $[\mathbf{x}] \gets [\underline{x}_1,\overline{x}_1]\times\ldots,\times[\underline{x}_{n^{\{0\}}},\overline{x}_{n^{\{0\}}}]$
						\State $[\mathbf{y}] \gets [\Phi]([\mathbf{x}])$
						\State $\mathcal{M} \gets \{([\mathbf{x}],[\mathbf{y}])\}$
						\While{$\mathcal{M} \neq \emptyset$}
						\State Select and remove an element $([\mathbf{x}],[\mathbf{y}])$ from $\mathcal{M}$
						\If{$[\mathbf{y}]\cap\neg\mathcal{S} = \emptyset$}
						\State Continue
						\Else
						\If{$w(\mathbf{[x]}) > \varepsilon$}
						\State Bisect $[\mathbf{x}]$ to obtain $[\mathbf{x}_1]$ and $[\mathbf{x}_2]$
						\For{$i=1:1:2$} 
						\If{$[\mathbf{x}_i]\cap \mathcal{X} \neq \emptyset$}
						\State $[\mathbf{y}_i] \gets [\Phi]([\mathbf{x}_i])$
						\State $\mathcal{M} \gets \mathcal{M} \cup \{([\mathbf{x}_i],[\mathbf{y}_i])\}$
						\EndIf
						\EndFor
						\Else
						\State \Return Uncertain
						\EndIf
						\EndIf
						\EndWhile
						\State \Return Safe
						%\EndFunction
					\end{algorithmic}
				\end{algorithm}
				
				The implementation of the specification-guided  algorithm shown in Algorithm \ref{alg1} checks that the intersection between output set and unsafe region is empty, within a pre-defined tolerance $\varepsilon$. This is accomplished by dividing and checking the initial input interval into increasingly
				smaller sub-intervals.
				
				\begin{itemize}
					\item \textbf{Initialization.} Set a tolerance $\varepsilon>0$. Since our approach is based on interval analysis, convert input set $\mathcal{X}$ to an interval $[\mathbf{x}]$ such that $\mathcal{X} \subseteq [\mathbf{x}]$. Compute the initial output interval $[\mathbf{y}] = [\Phi]([\mathbf{x}])$. Initialize set $\mathcal{M} = \{([\mathbf{x}],[\mathbf{y}])\}$.     
					\item \textbf{Specification-guided bisection.}
					This is the key in the algorithm. Select an element $([\mathbf{x}],[\mathbf{y}])$ for specification-guided bisection. If the output interval $[\mathbf{y}]$ of sub-interval $[\mathbf{x}]$ has no intersection with the unsafe region, we can discard this sub-interval for the subsequent dividing and checking since it has been proven safe. Otherwise, the bisection action will be activated to produce finer subdivisions to be added to $\mathcal{M}$ for subsequent checking. The bisection process is guided by the given safety specification, since the activations of bisection actions are totally determined by the non-emptiness of the intersection between output interval sets and the given unsafe region. This distinguishing feature leads to finer subdivisions when the output set is getting close to the unsafe region, and on the other hand coarse subdivisions are sufficient for safety verification when the output set is far wary from the unsafe area. Therefore, unnecessary computational cost can be avoided. In the experiments section, it will be clearly observed how the bisection actions are guided by safety specification in a numeral example.
					\item \textbf{Termination.} The specification-guided bisection procedure continues until $\mathcal{M}=\emptyset$ which means all sub-intervals have been proven safe, or the width of subdivisions becomes less than the pre-defined tolerance $\varepsilon$ which leads to an uncertain conclusion for the safety. Finally, when Algorithm \ref{alg1} outputs an uncertain verification result, we can select a smaller tolerance $\varepsilon$ to perform the safety verification.
				\end{itemize}

				\section{Experiments}
				\subsection{Random Neural Network}
				To demonstrate how the specification-guided idea works in safety verification, a neural network with two inputs and two outputs is proposed. The neural network has 5 hidden layers, and each layer contains 10 neurons. The weight matrices and bias vectors are randomly generated. The input set is assumed to be $[\mathbf{x}^{\{0\}}] = [-5,5]\times [-5,5]$ and the unsafe region is $\neg \mathcal{S} = [1,\infty)\times[1,\infty)$. 
				
				\begin{table}
					\centering
					\caption{Comparison on number of intervals and computational time to existing approach}\label{tab1}
					\begin{tabular}{c|c|c}
						\hline
						%\rowcolor{mygray}
						% after \\: \hline or \cline{col1-col2} \cline{col3-col4} ...
						%\rowcolor{mygray}
						& Intervals 	& Computational Time   \\
						\hline
						Algorithm \ref{alg1}   &  4095 &  21.45 s \\
						\hline
						Xiang et al. 2018 & 111556 & 294.37 s\\
						\hline
					\end{tabular}
				\end{table} 
				
				We execute Algorithm \ref{alg1} with termination parameter $\varepsilon = 0.01$, the safety can be guaranteed by partitioning $[\mathbf{x}^{\{0\}}]$ into 4095 interval sets. The specification-guided partition of the input space is shown in Figure \ref{fig1}. A non-uniform input space partition is generated based on the specification-guided scheme. An obvious specification-guided effect can be observed in Figure \ref{fig1}.  The specification-guided method requires much less computational complexity compared to the approach in \cite{xiang2018output} which utilizes a uniform partition of input space, and a comparison is listed in Table \ref{tab1}. The
				computation is carried out using Matlab 2017
				on a personal computer with Windows 7, Intel Core i5-4200U, 1.6GHz, 4 GB
				RAM. It can be seen that the number of interval sets and computational time have been significantly reduced to 3.67\% and 7.28\%, respectively, compared to those needed in~\cite{xiang2018output}. Figure \ref{fig2} illustrates the union of 4095 output interval sets, which has no intersection with the unsafe region, illustrating the safety specification is verified. Figure \ref{fig2} shows that the output interval estimation is guided to be tight when it comes close to unsafe region, and when it is far way from the unsafe area, a coarse estimation is sufficient to verify safety.

				\begin{figure}[ht!]
					\includegraphics[width=9cm]{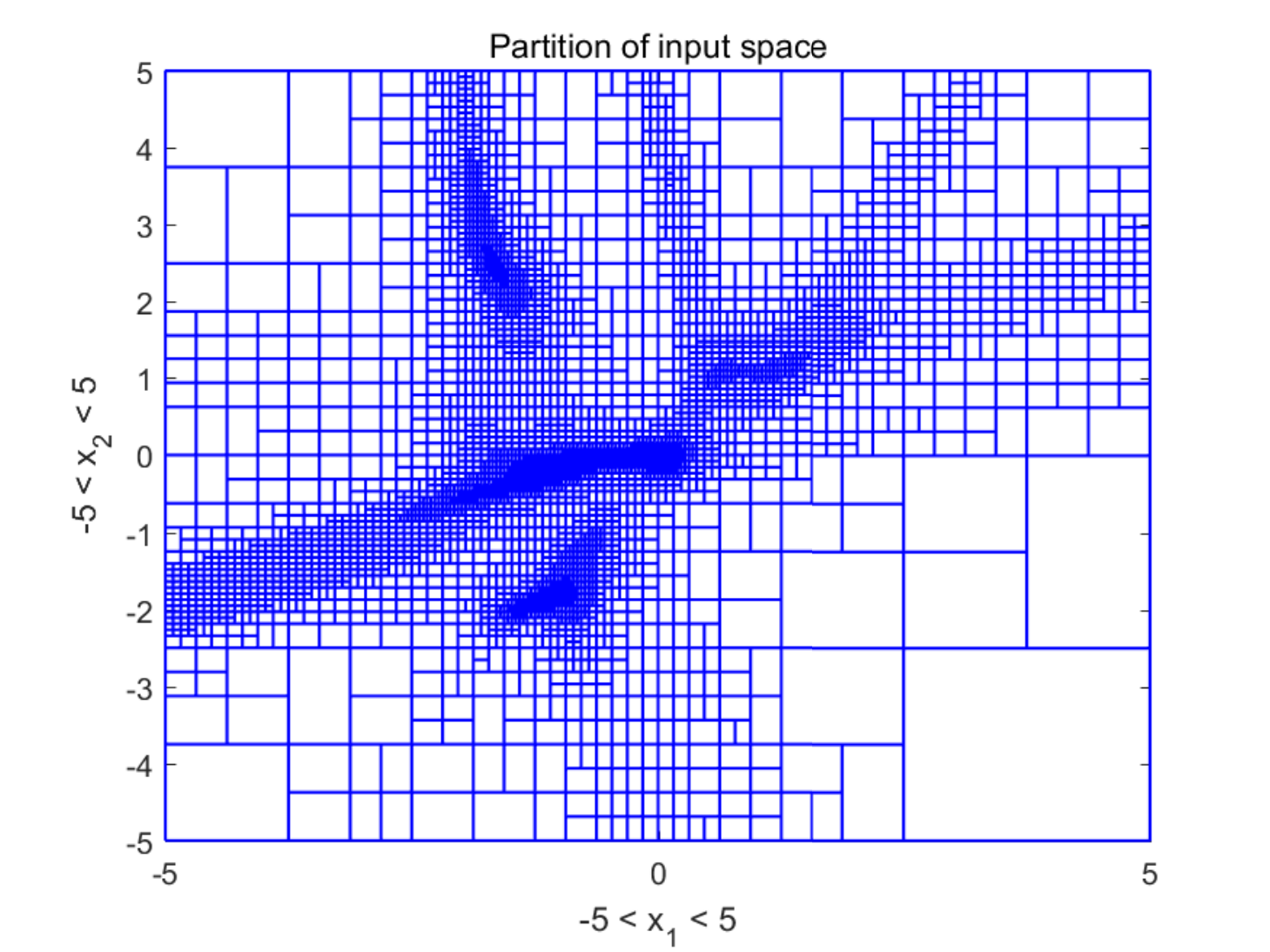}
					\caption{Specification-guided bisections of input interval by Algorithm \ref{alg1}. Guided by safety specification, finer partitions are generated when the output intervals are close to the unsafe region, and coarse partitions are generated when the output intervals are far wary. }
					\label{fig1}
				\end{figure}
				\begin{figure}
					\includegraphics[width=9cm]{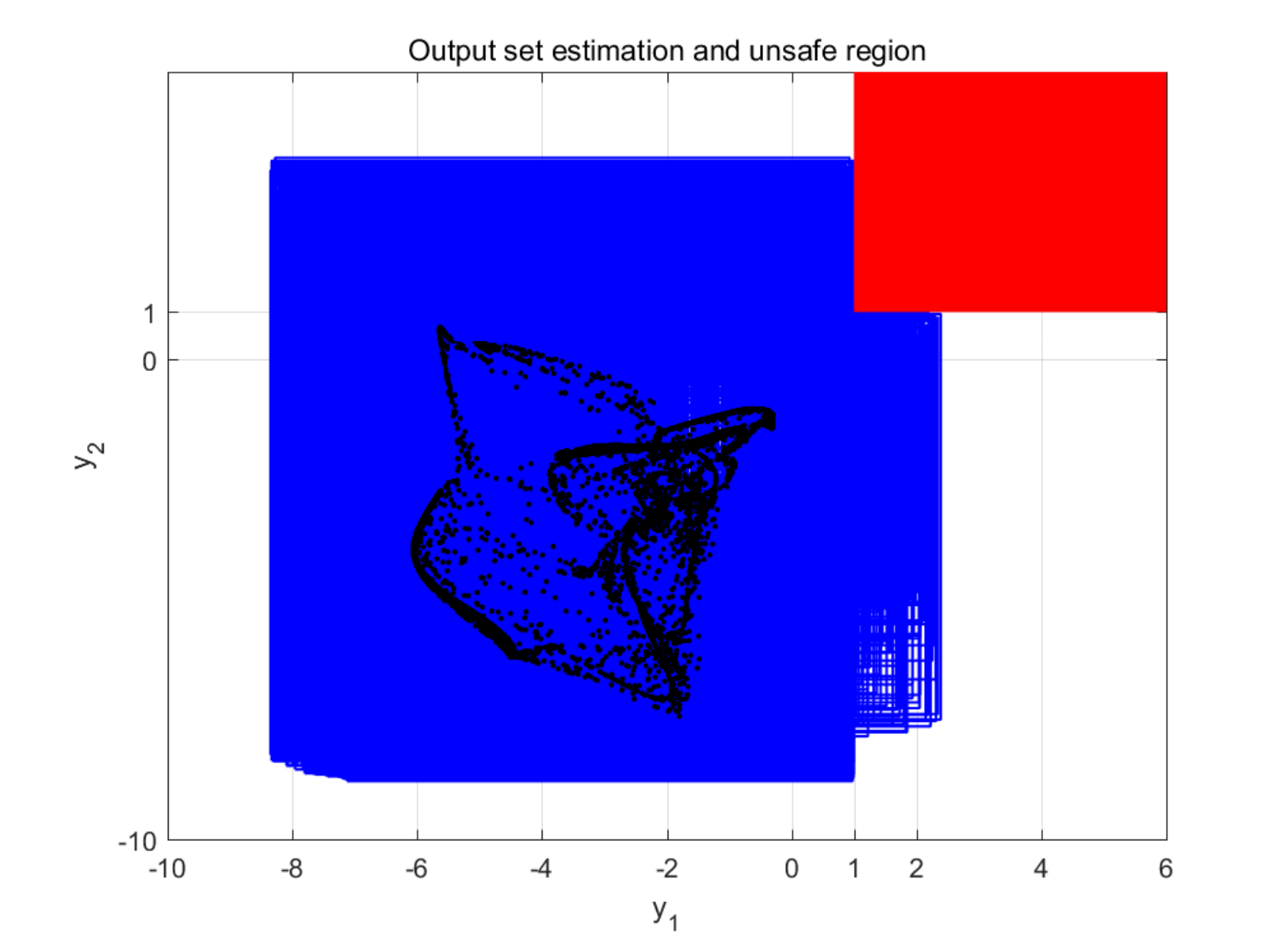}
					\caption{Output set estimation of neural networks. Blue boxes are output intervals, red area is unsafe region, black dots are 5000 random outputs.}
					\label{fig2}
				\end{figure}

				\subsection{Robotic Arm Model}
				\begin{figure}[ht!]
					\begin{center}
						\includegraphics[width=4cm]{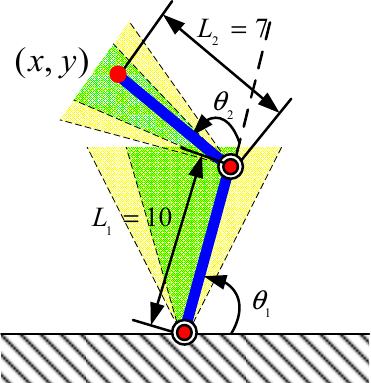}
						\caption{Robotic arm with two joints. The normal working zone of $(\theta_1,\theta_2)$ is colored in green $\theta_1,\theta_2 \in [\frac{5\pi}{12},\frac{7\pi}{12}]$. The buffering zone is in yellow $\theta_1,\theta_2 \in [\frac{\pi}{3},\frac{5\pi}{12}] \cup [\frac{7\pi}{12},\frac{2\pi}{3}] $. The forbidden zone is $\theta_1,\theta_2 \in [0,\frac{\pi}{3}] \cup [\frac{2\pi}{3},2\pi] $.
						}
						\label{robotic_arm}
					\end{center}
				\end{figure}
				
				In \cite{xiang2018output}, a  \emph{learning forward kinematics} of a robotic arm model with two joints is proposed, shown in Figure \ref{robotic_arm}. 
				The learning task is using a feedforward neural network to predict the position $(x,y)$
				of the end with knowing the joint angles $(\theta_1,\theta_2)$. The input space $[0,2\pi]\times [0,2\pi]$ for $(\theta_1,\theta_2)$  is classified into three zones for its operations: normal working zone $\theta_1,\theta_2 \in [\frac{5\pi}{12},\frac{7\pi}{12}]$, buffering zone $\theta_1,\theta_2 \in [\frac{\pi}{3},\frac{5\pi}{12}] \cup [\frac{7\pi}{12},\frac{2\pi}{3}] $ and forbidden zone $\theta_1,\theta_2 \in [0,\frac{\pi}{3}] \cup [\frac{2\pi}{3},2\pi]$. The detailed formulation for this robotic arm model and neural network training can be found in \cite{xiang2018output}. 
				
				The safety specification for the position $(x,y)$ is 
				$\mathcal{S}=\{(x,y)\mid -14 \le x\le 3~\mathrm{and}~1 \le y \le 17\}$. The input set of the robotic arm is the union of normal working  and buffering zones, that is $(\theta_1,\theta_2) \in [\frac{\pi}{3},\frac{2\pi}{3}] \times [\frac{\pi}{3},\frac{2\pi}{3}]$.
				In the safety point of view, the neural network needs to be verified that all the outputs produced by the inputs in the normal working zone and buffering zone will satisfy  safety specification $\mathcal{S}$. In \cite{xiang2018output}, a uniform partition for input space is used, and thus 729 intervals are produced to verify the safety property. Using our specification-guided approach, the safety can be guaranteed by partitioning the input space into only 15 intervals, see Figure \ref{fig3} and Figure \ref{fig4}. Due to the small number of intervals involved in the verification process, the computational time is only 0.27 seconds for specification-guided approach.

				\begin{figure}[ht!]
					\includegraphics[width=9cm]{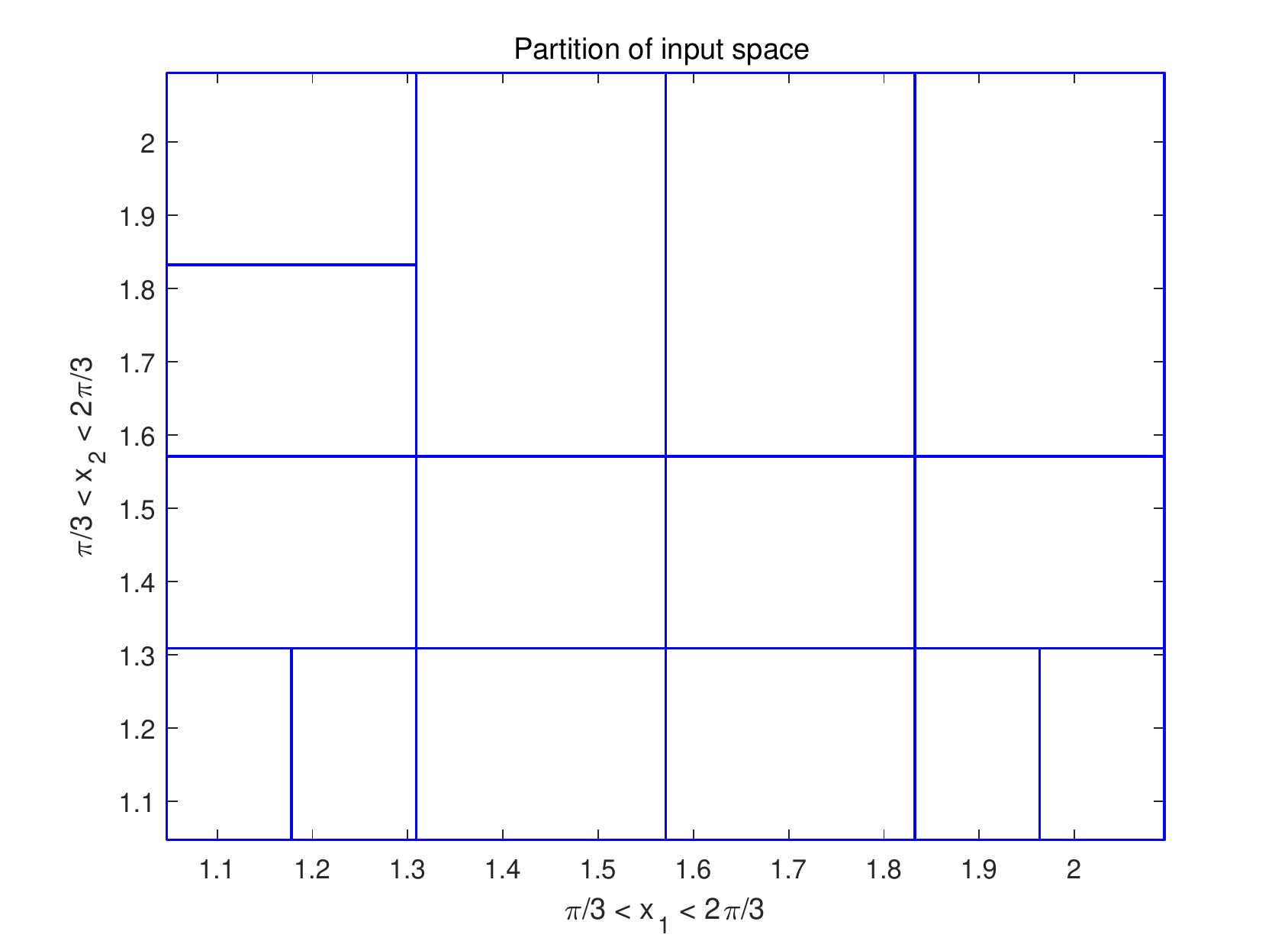}
					\caption{15 sub-intervals for robotic arm safety verification.}
					\label{fig3}
				\end{figure}
				
				\begin{figure}
					\includegraphics[width=9cm]{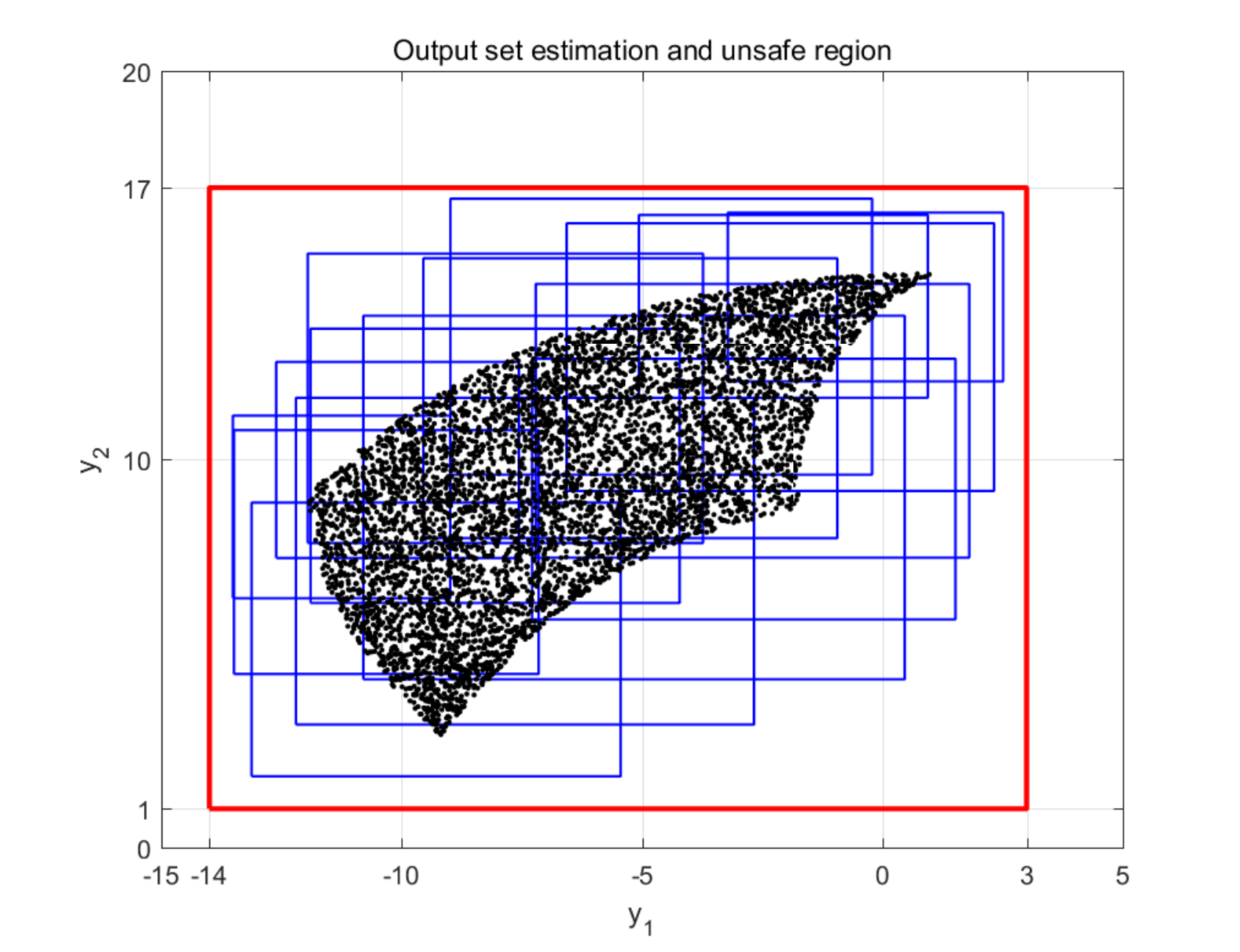}
					\caption{Safety verification for neural network of robotic arm.   Blue boxes are output intervals, red box are boundary for unsafe region, black dots are 5000 random outputs. 15 output intervals are sufficient to prove the safety. }
					\label{fig4}
				\end{figure}

				\begin{figure}[ht!]
					\includegraphics[width=9cm]{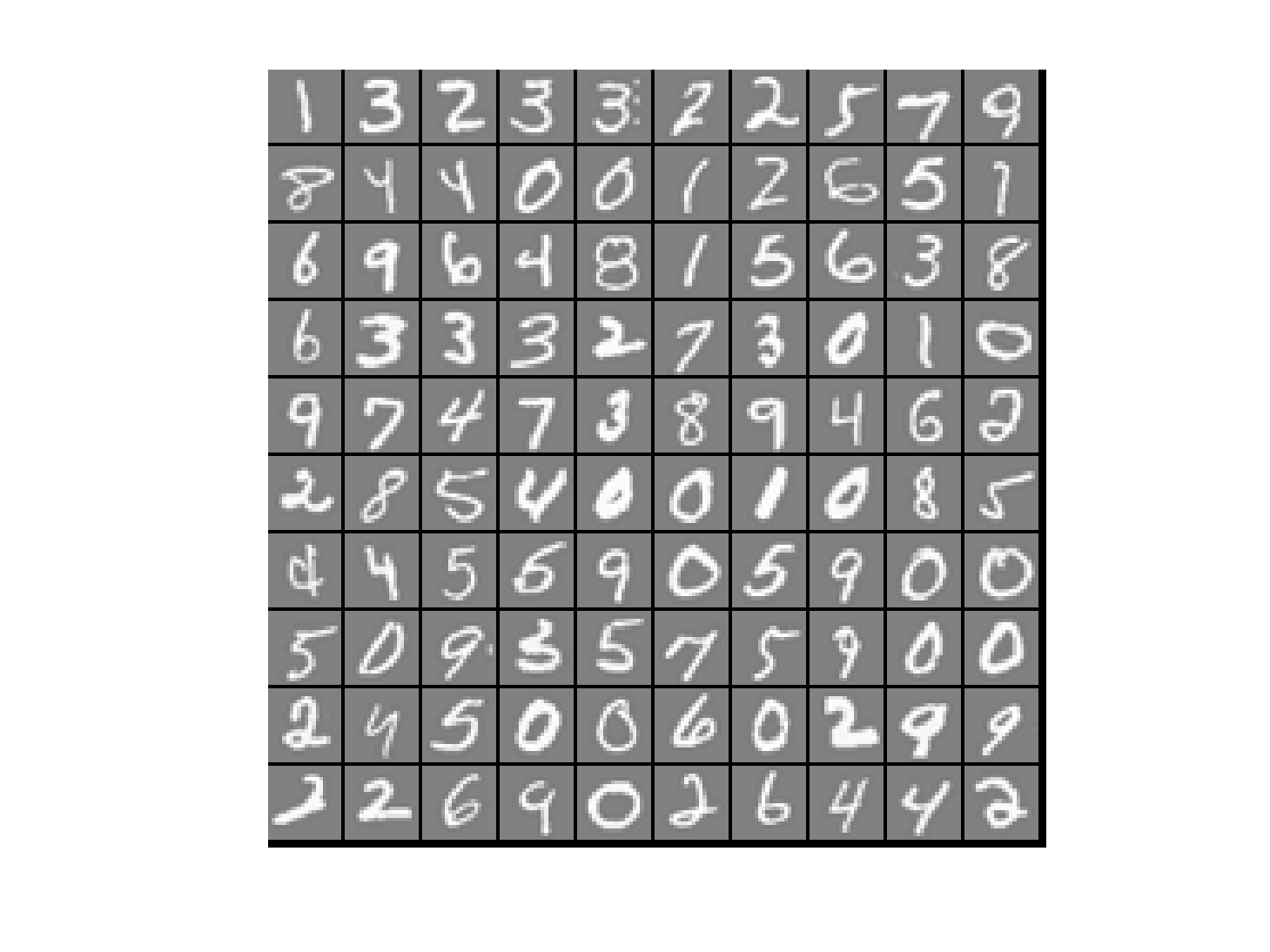}
					\caption{Examples from the MNIST handwritten digit dataset. }
					\label{fig5}
				\end{figure}
				\begin{figure}
					\centering
					\includegraphics[width=9cm]{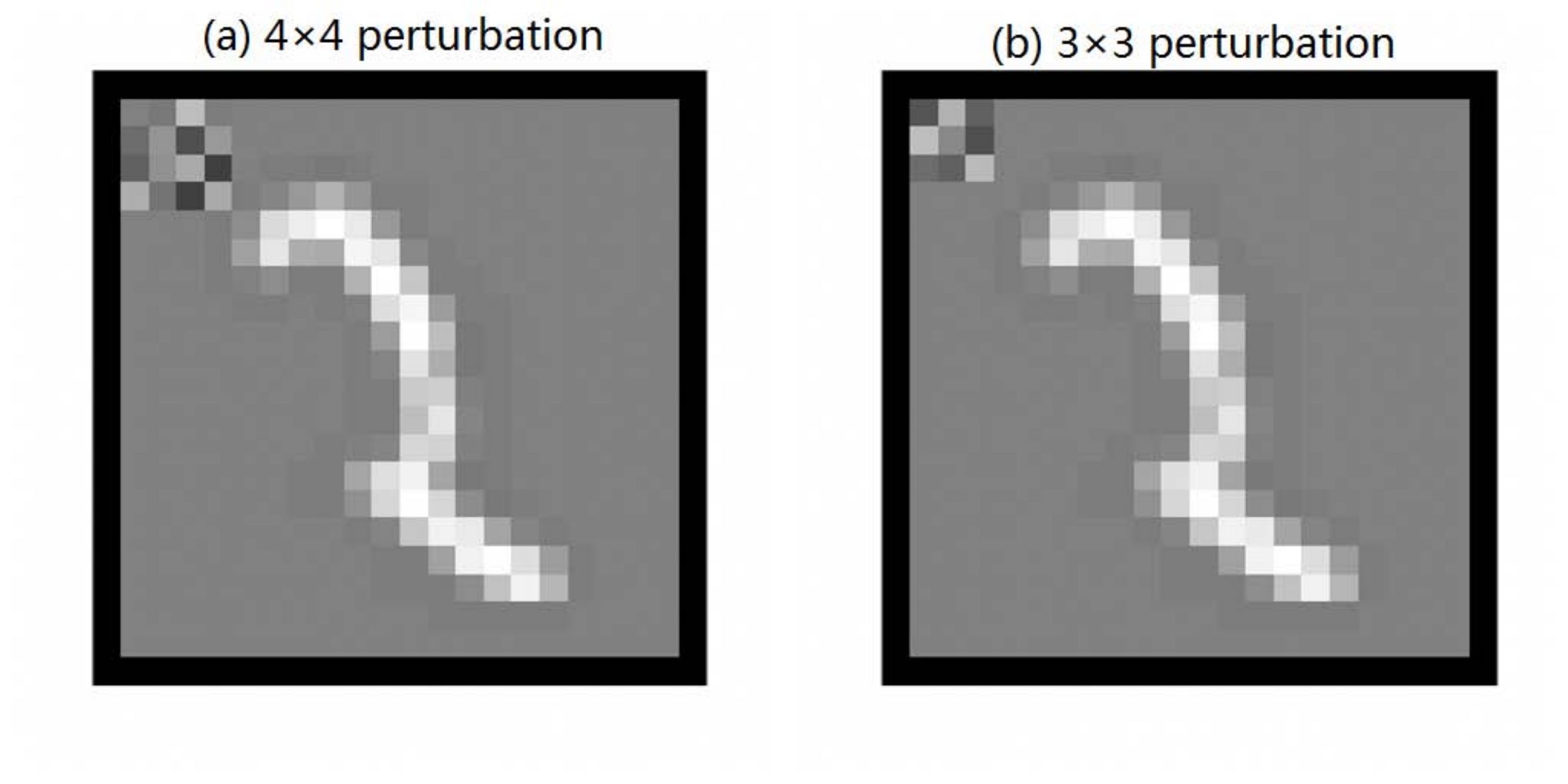}
					\caption{Perturbed image of digit 2 with perturbation in $[-0.5.0.5]$. (a) $4 \times 4$ perturbation at the left-top corner, the neural network will wrongly label it as digit 1. (b) $3 \times 3$ perturbation at the left-top corner, the neural network can be proved to be robust for this class of perturbations. }
					\label{fig6}
				\end{figure}

				\subsection{Handwriting Image Recognition}
				In this handwriting image recognition task, we use 5000 training examples of handwritten digits which is a subset of the MNIST handwritten digit dataset (http://yann.lecun.com/exdb/mnist/), examples from the dataset are shown in Figure \ref{fig5}. Each training
				example is a 20 pixel by 20 pixel grayscale image of the digit. Each pixel is
				represented by a floating point number indicating the grayscale intensity at that location. We first train a neural network with 400 inputs, one hidden layer with 25 neurons and 10 output units corresponding to the 10 digits. The activation functions for both hidden and output layers are sigmoid functions. A trained neural network with about 97.5\% accuracy is obtained.
				
				Under adversarial perturbations, the neural network may produce a wrong prediction. For example in Figure \ref{fig6}(a) which is an image of digit $2$, the label predicted by the neural network will turn to $1$ as a $4 \times 4$ perturbation belonging to $[-0.5,0.5]$ attacks the left-top corner of the image. With our developed verification method, we wish to prove that the neural network is robust to certain classes of perturbations, that is no perturbation belonging to those classes can alter the prediction of the neural network for a perturbed image. Since there exists one adversarial example for $4 \time 4$ perturbations at the left-top corner, it implies this image is not robust to this class of perturbation. We consider another class of perturbations, $3\times 3$ perturbations at the left-top corner, see Figure \ref{fig6}(b). Using Algorithm \ref{alg1},  the neural network can be proved to be robust to all $3\times 3$ perturbations located at at the left-top corner of the image, after 512 bisections.
				
				Moreover, applying Algorithm \ref{alg1} to all 5000 images with $3\times 3$ perturbations belonging to $[-0.5,0.5]$ and located at the left-top corner, it can be verified  that the neural network is robust to this class of perturbations for all images. This result means this class of perturbations will not affect the prediction accuracy of the neural network. The neural network is able to maintain its 97.5\% accuracy even subject to any perturbations belonging to this class of $3 \times 3$ perturbations.

				\section{Conclusion and Future Work}
				In this paper, we introduce a specification-guided approach for safety verification of feedforward neural networks with general activation functions. By formulating the safety verification problem into the framework of interval analysis, a fast computation formula for calculating output intervals of feedforward neural networks is developed. Then, a safety verification algorithm which is called specification-guided is developed. The algorithm is specification-guided since the activation of bisection actions are totally determined by the existence of intersections between the computed output intervals and unsafe sets. This distinguishing feature makes the specification-guided approach be able to avoid unnecessary computations and significantly reduce the computational cost. Several experiments are proposed to show the advantages of our approach. 
				
				Though our approach is general in the sense that it is not tailored to specific activation functions, the specification-guided idea has potential to be further applied to other methods dealing with specific activation functions such as ReLU neural networks to enhance their scalability. Moreover, since our approach can compute the output intervals of a neural network, it can be incorporated with other reachable set estimation methods to compute the dynamical system models with neural network components inside such as extension of \cite{xiang2018output} to closed-loop systems \cite{xiang2019reachable} and neural network models of nonlinear dynamics \cite{xiang2018reachable_b}. 

\bibliographystyle{ieeetr}
\bibliography{ref}

\end{document}